\documentclass{article}
\usepackage{spconf,amsmath,graphicx}
\usepackage{url}
\graphicspath{{images/}}

\usepackage{amsthm}
\newtheoremstyle{boldnote}{\topsep}{\topsep}{\itshape}{}{\bfseries}{.}{ }{\thmname{#1}\thmnumber{ #2}\thmnote{ (#3)}}
\theoremstyle{boldnote}
\newtheorem{definition}{Definition}
\newtheorem{proposition}{Proposition}
\usepackage{booktabs}
\usepackage{siunitx}
\title{DuoPIC: Uncertainty-Aware Voxel-Level Digital Twins of Cerebral Perfusion with Transport PDEs in Acute Stroke}
\name{\begin{tabular}[t]{@{}c@{}}Anh Duong Tran$^{\star,*}$ \quad Quang-Anh N. D.$^{\dagger,*}$ \quad Toan Sy Pham$^{\star}$ \quad Trang H. Hoang$^{\dagger,\S}$ \\ Minh N. Do$^{\parallel,\sharp}$ 
\quad Duy Ton Mai$^{\ddagger,\P,\star}$ \quad Huan X. Nguyen$^{\dagger,\parallel}$\end{tabular}}

\address{
$^*$Equal contribution  \quad
$^\dagger$ London Digital Twin Research Centre, Middlesex University\\
$^\ddagger$ Bach Mai Stroke Centre, Bach Mai Hospital \quad $^{\P}$ Hanoi Medical University\\
$^\star$ Vietnam National University \quad
$^\sharp$ University of Illinois Urbana-Champaign \\
$^\parallel$ College of Engineering and Computer Science, VinUniversity \quad $^\S$ University of Alberta \quad
}

\let\oldthebibliography\thebibliography
\renewcommand{\thebibliography}[1]{\oldthebibliography{#1}\setlength{\itemsep}{1pt plus 0.5pt}\setlength{\parsep}{0pt}}
\begin{document}
\maketitle

\begin{abstract}
Digital twins of cerebral perfusion relate tracer dynamics to cerebral blood flow (CBF), blood volume (CBV) and mean transit time (MTT), but must be data-driven, physically consistent and uncertainty-aware. We introduce DuoPIC, a digital twin that needs no reference maps, in which a Windkessel flow model drives a tracer transport partial differential equation (PDE). A population-level neural closure divides each region into fast and slow pathways with tracer mass and MTT conserved by construction, and a tempered posterior quantifies the uncertainty of every estimate. Its voxel maps set the state of the art for lesion detection on ISLES'17 MR (AUC 0.745) and ISLES’24 CT (0.743). On held-out patients, the neural PDE reduced the compartment’s reconstruction error by 19\% and lifted 95\% MTT interval coverage from 88\% to 94\%.
Code: \url{https://github.com/detinatran/duopic}.
\end{abstract}

\begin{keywords}
Cerebral perfusion, Ischemic stroke, Digital twin, Physics-informed modeling
\end{keywords}

\section{Introduction}
\label{sec:intro}

\begin{figure*}[t]
\centering
\includegraphics[width=0.85\textwidth]{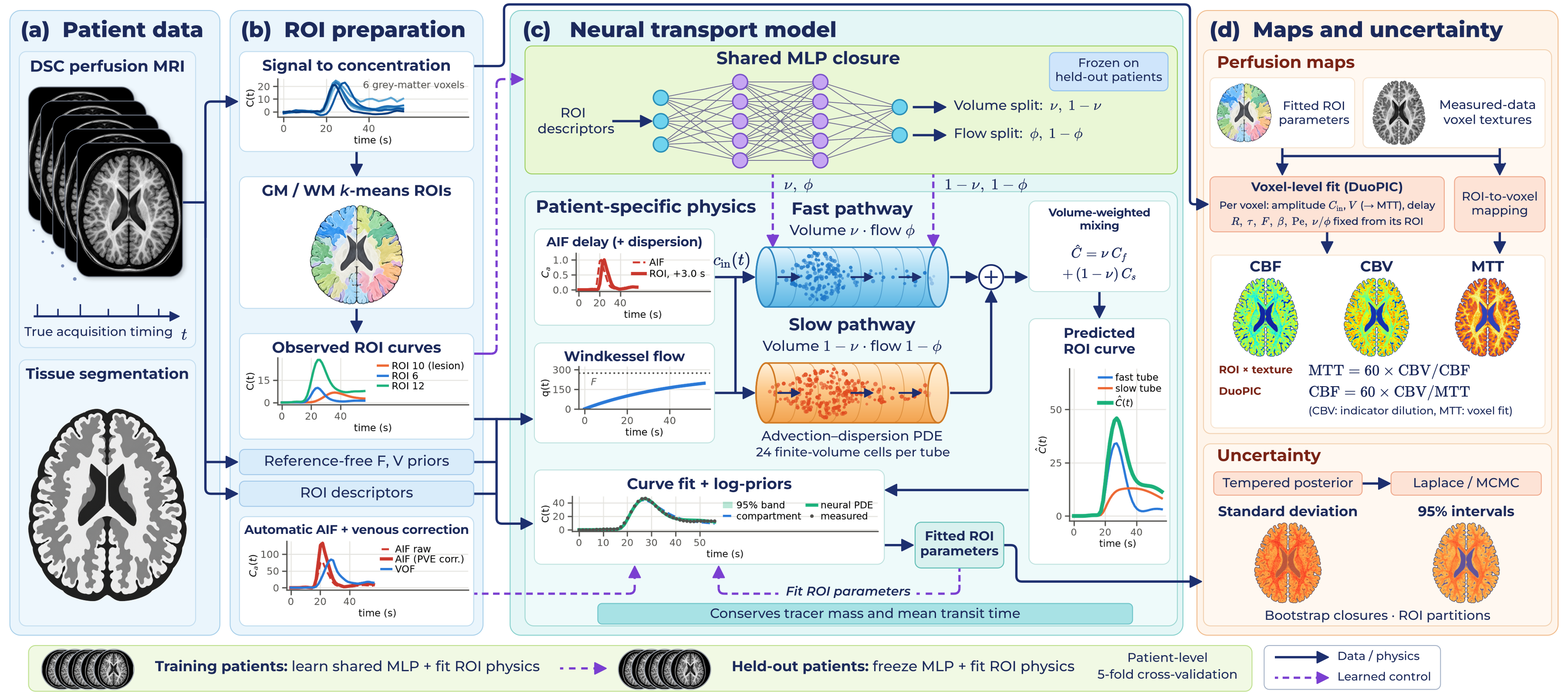}
\caption{Overview of DuoPIC. (a)~Input data. (b)~ROI preparation with reference-free priors and an automatic AIF. (c)~Neural transport model, in which Windkessel flow and a delayed, dispersed input drive a fast and a slow tube split by a shared network that conserves tracer mass and MTT. (d)~Voxel maps (ROI level $\times$ texture, or voxel fit) and their uncertainty.}
\label{fig:overview}
\end{figure*}

The World Health Organization reports 11.9 million new strokes worldwide in 2021 \cite{who2025stroke}. In acute ischemic stroke, perfusion imaging reveals hypoperfused but salvageable tissue, and in the DEFUSE 3 trial it selected patients benefiting from thrombectomy 6--16 hours after last known well \cite{albers2018defuse3}. Treatment decisions thus depend on reliable patient-specific estimates of cerebral blood flow (CBF), cerebral blood volume (CBV) and mean transit time (MTT). Clinical perfusion analysis deconvolves each tissue curve with an arterial input function (AIF) by truncated or block-circulant singular value decomposition \cite{ostergaard1996highres,wu2003osvd}, a model-free approach sensitive to noise and to the AIF. Model-based methods describe the microvasculature physiologically \cite{mouridsen2006bayesian}, treat perfusion as advection and diffusion \cite{liu2021piano}, or embed tracer kinetics in physics-informed neural networks \cite{devries2023spatiotemporal}, whereas deep networks predict the final infarct directly but output no physiological quantities \cite{nielsen2018prediction}. Yet mechanistic models are limited by their simplifications, and neural networks need labels and lack physical explanation. Physics-informed machine learning combines the two by letting a network learn only what the physical model leaves unexplained \cite{karniadakis2021physics,rackauckas2020universal}. A digital twin is a physics-based model personalized using patient-specific data and incorporating physiologically meaningful parameters to simulate unobserved conditions, such as a restored inflow \cite{katsoulakis2024digital,briggs2026cerebrlsim}.  
 
HemoPIC \cite{lee2026hemopic} brought digital twins to brain perfusion by personalizing flow and tracer models for every scan region. However, this method has several limitations, including inconsistent units, reliance on priors derived from reference deconvolution maps, an inability to capture capillary transit-time heterogeneity due to its well-mixed regional compartment assumption \cite{ostergaard2013capillaries}, and a lack of uncertainty quantification. We address these limitations by using a shared neural network across patients to learn only what the physics cannot identify (Fig.~\ref{fig:overview}). Our main contributions are threefold.
(i)~We propose DuoPIC, a reference-free and uncertainty-aware perfusion digital twin framework that couples patient-specific haemodynamics with transport physics from region to voxel level.
(ii)~We introduce a physics-preserving neural closure, shared across patients, that learns transit-time heterogeneity while provably conserving tracer mass, CBF, CBV and MTT.
(iii) DuoPIC voxel maps achieve state-of-the-art lesion detection on ISLES'17 MR and 145 ISLES'24 CT perfusion patients.

\section{Proposed DuoPIC}
\label{sec:method}
\setlength{\abovedisplayskip}{3pt}\setlength{\belowdisplayskip}{3pt}%
From the dynamic susceptibility contrast (DSC) concentration $C(\mathbf{x},t_n)=-\mathrm{TE}^{-1}\ln(S/S_0)$ (echo time TE) at frames $t_n=n\Delta$, DuoPIC estimates voxel CBF, CBV and MTT with posterior SD and 95\% intervals, without reference maps or lesion labels, as in Definition~\ref{def:id}. Voxel curves are too noisy for such a model, so DuoPIC builds on HemoPIC, where k-means clusters grey- and white-matter voxels by curve features and position into homogeneous regions of interest (ROIs).
\par\smallskip\noindent\begin{minipage}{\columnwidth}
\begin{definition}[Indicator dilution]
\label{def:id}
With $\kappa_0=100k_H/\rho$ (haematocrit factor $k_H$, tissue density $\rho$), tissue responds to the arterial curve $C_a$ as $C=\frac{\mathrm{CBF}}{60\kappa_0}(C_a*\mathcal{R})$ (CBF in ml/100\,g/min, $t$ in s), with residue function $\mathcal R$ and transit-time density $h=-\mathcal R'$, so that
\begin{equation}
\mathrm{CBV}=\kappa_0\frac{\int C\,dt}{\int C_a\,dt}, \mathrm{MTT}=\int_0^\infty t\,h\,dt=60\,\frac{\mathrm{CBV}}{\mathrm{CBF}}.
\label{eq:cvt}
\end{equation}
\end{definition}
\end{minipage}

\begin{figure}[t]
\centering
\includegraphics[width=\columnwidth]{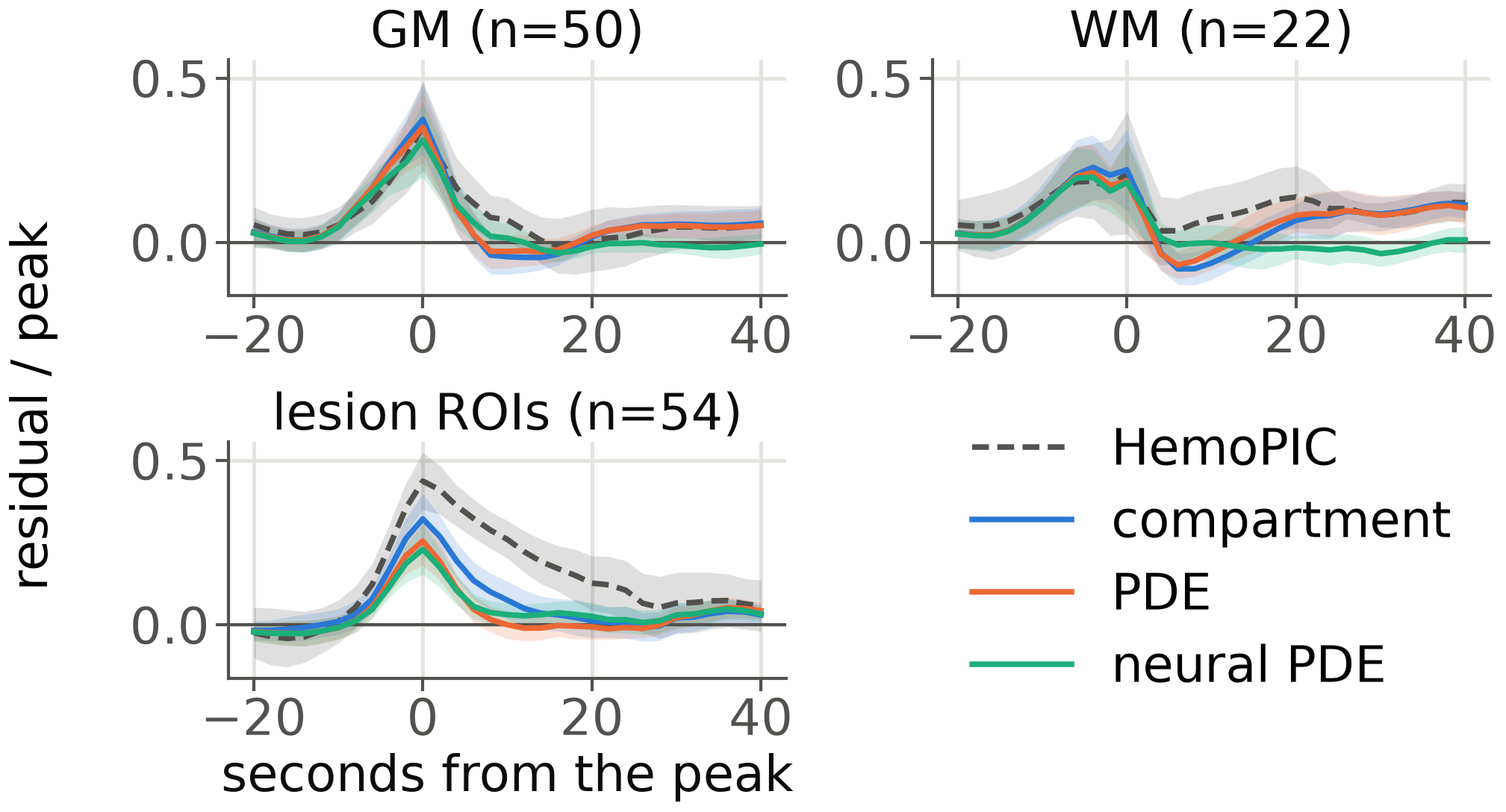}
\caption{Mean residual (measured $-$ fitted, relative to the peak) around the bolus peak, $\pm2$ standard errors, for HemoPIC \cite{lee2026hemopic} and DuoPIC; GM, WM: lesion-free ROIs; lesion: ROIs over 20\% infarcted; $n$: number of ROIs.}
\label{fig:residuals}
\end{figure}

\begin{definition}[HemoPIC regional model \cite{lee2026hemopic}]
\label{def:hemopic}
For every ROI, the outflow follows a two-element Windkessel, $q(t)=F+(p_0/R-F)\,e^{-t/\tau}$, and tracer kinetics follow a well-mixed compartment model,
\begin{equation}
\frac{dC}{dt}=k(t)\Big(\frac{F}{q(t)}\,c_{\mathrm{in}}(t)-C\Big),\qquad k(t)\propto\frac{q(t)}{V},
\label{eq:tank}
\end{equation}
driven by the arterial input $c_{\mathrm{in}}=C_{\mathrm{in}}\hat a$, with $\hat a$ the normalized input curve and exponential transit times, $h(t)=ke^{-kt}$.
\end{definition}
\noindent Its parameters $\theta$ minimize
\begin{equation}
L(\theta)=\frac{N(\theta)}{D}+\sum_p\lambda_p\big(\ln p-\ln p^0\big)^2,
\label{eq:loss}
\end{equation}
with $N=\sum_n w_n(\hat C_n(\theta)-c_n)^2$ and $D=\sum_n w_nc_n^2$ for the ROI curve $c$, so ROIs are comparable, $w_n$ emphasizes the bolus and log-space priors keep parameters positive. Voxel maps are the ROI level of CBF or CBV times a voxel texture.
 
\subsection{Reference-free regional model}
\label{ssec:corrections}
The model runs on the true time axis of each scan, with $\Delta$ the repetition time from the DICOM metadata (1.4-1.8\,s), as a nominal 1\,s would bias all time constants by 40-80\%. The rate $k=q/(60V)$ is defined per second for a flow in ml/100\,g/min, so that $\mathrm{MTT}=1/k|_{q=F}=60V/F$ matches Eq.~\eqref{eq:cvt} and the curve shape constrains $F$ and $V$ rather than their priors. Eq.~\eqref{eq:tank} is integrated exactly per frame, $C_{n+1}=a_nC_n+(1-a_n)(F/q_n)c_{\mathrm{in},n}$ with $a_n=e^{-k_n\Delta}$.
 
An absolute MTT scale requires an arterial input, since tissue curves have already crossed the capillary bed. The AIF $C_a$ is the earliest shape cluster of clean, high-area bolus curves, corrected for partial volume by the venous output function, $C_a\leftarrow\mathrm{clip}(\int C_v/\int C_a,1,5)\,C_a$. As ischemic tissue receives the bolus late and dispersed, each ROI has its own delay and dispersion \cite{calamante2000delay},
\begin{equation}
c_{\mathrm{in}}(t)=C_{\mathrm{in}}\,(h_\beta*\hat a)(t-\delta),\quad h_\beta(t)=\beta^{-1}e^{-t/\beta},
\label{eq:input}
\end{equation}
applied exactly as a first-order recursion. To avoid circularity, the priors are the indicator-dilution volume and maximum-slope flow of the baseline-subtracted ROI and arterial curves $d$, $d_a$, in absolute units,
\begin{equation}
V^0=\kappa_0\,\frac{\sum_n d_n}{\sum_n d_{a,n}},\qquad F^0=60\,\kappa_0\,\frac{\max_n \dot d_n}{\max_n d_{a,n}}.
\label{eq:priors}
\end{equation}
CBV is computed by voxel indicator dilution (ID-CBV) with an oSVD-based CBF texture. Poor fits ($N/D>0.15$) get four random restarts, and all fits are polished by L-BFGS-B.
 
\subsection{Transport PDE and neural closure}
\label{ssec:pde}
The compartment still misses the bolus peak in all tissues (Fig.~\ref{fig:residuals}), as with exponential transit times tracer leaves on arrival. We therefore model each ROI as a transport tube.
\begin{definition}[Transport tissue model]
\label{def:pde}
The tracer of an ROI moves through a tube, $s\in[0,1]$, with plug flow and longitudinal dispersion at P\'eclet number Pe,
\begin{equation}
\frac{\partial c}{\partial t}=k(t)\Big(-\frac{\partial c}{\partial s}+\mathrm{Pe}^{-1}\,\frac{\partial^2 c}{\partial s^2}\Big),\qquad C(t)=\int_0^1 c\,ds,
\label{eq:pde}
\end{equation}
with the Danckwerts inlet condition $c-\mathrm{Pe}^{-1}\partial_s c=u:=(F/q)\,c_{\mathrm{in}}$ \cite{danckwerts1953continuous} and a zero-gradient outlet.
\end{definition}
\noindent With $N=24$ finite volumes $v_i$ and upwind convection, $\dot{\mathbf c}=k(\mathbf A\mathbf c+\mathbf b u)$, $\mathbf b=\mathbf e_1/v_1$ and $C=\mathbf v^{\top}\mathbf c$. The tridiagonal $\mathbf A$ is similar to a symmetric matrix, so $C$ is a sum of $N$ exponential modes advanced exactly per frame from one differentiable eigendecomposition per ROI.

\begin{figure}[t]
\centering
\includegraphics[width=0.8\columnwidth]{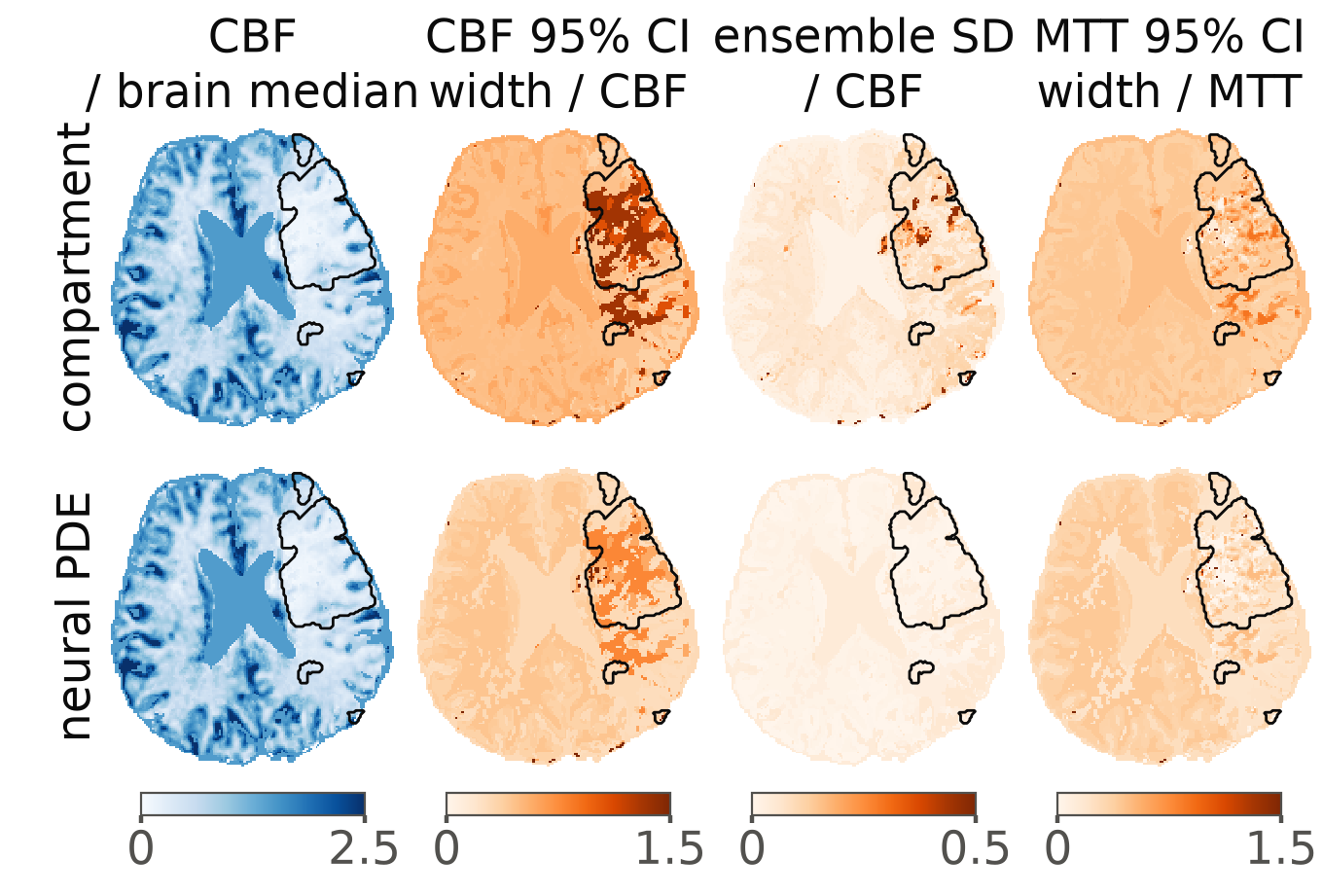}
\caption{Uncertainty maps for the compartment (top) and the neural PDE (bottom), with the final infarct in black. CBF is divided by its brain median and the others by the estimate.}
\label{fig:maps}
\end{figure}

A single tube only narrows the transit-time distribution (coefficient of variation $\mathrm{CV}\ge1/\sqrt N$), whereas ROIs mixing fast and slow tissue need a wider one (capillary transit-time heterogeneity \cite{ostergaard2013capillaries}). The neural closure \cite{rackauckas2020universal} splits each ROI into two parallel tubes sharing $u$ and Pe, with volume and flow fractions $\nu_m$, $\phi_m$ (each summing to 1), rates $k\phi_m/\nu_m$ and $C=\sum_m\nu_mC_m$. The following result ensures that it cannot absorb the physics.
\begin{proposition}[Conservation and map invariance]
\label{prop:invariance}
For any Pe and split, (i) tracer mass is conserved and the mean transit time is $1/k$, so CBF, CBV and $\mathrm{MTT}=60V/F$ keep the meaning of Eq.~\eqref{eq:cvt}, (ii) with $\nu=\phi$, a single cell or $\mathrm{Pe}\to0$ recovers Eq.~\eqref{eq:tank}, and (iii) transit-time spread can only grow,
\begin{equation}
\mathrm{CV}^2_{\mathrm{mix}}=(1+\mathrm{CV}^2_{\mathrm{tube}})\textstyle\sum_m\nu_m^2/\phi_m-1\ \ge\ \mathrm{CV}^2_{\mathrm{tube}},
\label{eq:cv}
\end{equation}
with equality iff $\nu=\phi$.
\end{proposition}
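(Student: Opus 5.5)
The whole argument runs through the transit-time density $h$ of the closure, that is, its impulse response at constant flow $q=F$, so $k$ is constant. Its moments follow from the semi-discrete system $\dot{\mathbf c}=k(\mathbf A\mathbf c+\mathbf b u)$. I would first check, by summing the finite-volume fluxes, that the upwind--dispersion stencil with the Danckwerts inlet and the zero-gradient outlet is conservative. This gives $\frac{d}{dt}\mathbf v^\top\mathbf c=k(u-c_N)$: inflow minus the outlet flux.

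For a unit impulse $u=\delta$, the total mass leaving is therefore $\int k\,c_N\,dt=1$, because $\mathbf A$ is similar to a symmetric negative definite matrix, so every mode decays. This is mass conservation. With $h=k c_N$, the mean transit time is $\int t\,h\,dt=\int_0^\infty C\,dt$, which equals the occupied volume fraction divided by the rate. In the scaled model this is exactly $1/k$.

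For two tubes, tube $m$ has rate $k_m=k\phi_m/\nu_m$ and receives the flow fraction $\phi_m$. The mixed density is then $h_{\mathrm{mix}}=\sum_m\phi_m h_m$, with $h_m(t)=k_m g(k_m t)$, where $g$ is the common dimensionless tube density with mean $1$ and second moment $1+\mathrm{CV}^2_{\mathrm{tube}}$. This holds because Pe is shared, so the two densities differ only by time scaling. The mean is then $\sum_m\phi_m/k_m=\sum_m\nu_m/k=1/k$. This proves (i), and Eq.~\eqref{eq:cvt} applies unchanged with $\mathrm{MTT}=60V/F$.

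For (ii), I would use three checks.
\begin{itemize}
\item With $\nu=\phi$, both tubes have rate $k$ and identical input, so they collapse to one tube.
\item With a single cell, the system is $\dot c=k(u-c)$, which is Eq.~\eqref{eq:tank}.
\item As $\mathrm{Pe}\to0$, diffusion dominates and the profile becomes spatially uniform. Integrating the PDE over $s$ with the boundary conditions then gives $\dot C=k(u-C)$.
\end{itemize}
This last limit is the delicate one. I would justify it by noting that every nonzero eigenvalue of $\mathbf A$ scales like $\mathrm{Pe}^{-1}$, so those modes decay instantly while the slowest mode tends to $-1$. The care is needed because the Danckwerts condition couples $u$ into the gradient term. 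I would handle this with a matched expansion in Pe, or simply by working with the finite-volume matrix directly.

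For (iii), the second moment is $\sum_m\phi_m(1+\mathrm{CV}^2_{\mathrm{tube}})/k_m^2=(1+\mathrm{CV}^2_{\mathrm{tube}})k^{-2}\sum_m\nu_m^2/\phi_m$. Dividing by the squared mean $k^{-2}$ and subtracting 1 gives Eq.~\eqref{eq:cv}. For the inequality, Cauchy--Schwarz gives $\sum_m\nu_m^2/\phi_m\cdot\sum_m\phi_m\ge(\sum_m\nu_m)^2=1$, with equality iff $\nu_m/\phi_m$ is constant, which means $\nu=\phi$. Since $1+\mathrm{CV}^2_{\mathrm{tube}}>0$, the claimed bound and its equality case follow.

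The main obstacle is the discrete claim in (i) that the mean transit time is exactly $1/k$, rather than approximately so, given the upwind stencil and the cell volumes $v_i$. I would verify it by computing $\int_0^\infty C\,dt=-\mathbf v^\top(k\mathbf A)^{-1}\mathbf b$. The aim is to show $\mathbf A^{\top}\mathbf v=-\mathbf e_N/v_N$-type flux identities, which give $-\mathbf v^\top\mathbf A^{-1}\mathbf b=1$. The point to confirm is that the total volume $\sum_i v_i=1$ times the unit flux yields exactly $1$.
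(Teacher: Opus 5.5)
Your route is the paper's: a flux-balance identity for conservation, a time-rescaled mixture in which tube $m$ is taken with probability $\phi_m$ and traversed $\nu_m/\phi_m$ times more slowly, and Cauchy--Schwarz for $\sum_m\nu_m^2/\phi_m\ge1$. Your moment algebra for (iii) and your checks for (ii) are adequate. The gap is in the step you flag as the main obstacle, the exact unit mean per tube.

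The identity you propose there is $\mathbf A^{\top}\mathbf v=-\mathbf e_N$. Drop the $1/v_N$: your own balance $\frac{d}{dt}\mathbf v^{\top}\mathbf c=k(u-c_N)$ gives it without that factor. This is the conservation identity, and it fixes only the zeroth moment. Right-multiplied by $\mathbf A^{-1}\mathbf b$, it gives $-\mathbf e_N^{\top}\mathbf A^{-1}\mathbf b=\mathbf v^{\top}\mathbf b=1$, i.e.\ $\int h\,dt=1$. For the first moment it only rewrites $\int t\,h\,dt=k^{-1}\mathbf e_N^{\top}\mathbf A^{-2}\mathbf b$ as $-k^{-1}\mathbf v^{\top}\mathbf A^{-1}\mathbf b$. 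It does not determine $\mathbf A^{-1}\mathbf b$. For example, take two cells of volume $\tfrac12$ whose internal face passes only $c_1/2$. This scheme satisfies $\mathbf v^{\top}\mathbf A=-\mathbf e_N^{\top}$ and $\mathbf v^{\top}\mathbf b=1$, yet its mean transit time is $3/(2k)$.

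What you need is the second identity in the paper's sketch, $\mathbf A\mathbf 1=-\mathbf b$. Under unit inflow $u=1$, a uniform unit profile carries unit flux through every face. Upwind and diffusive differences of a constant vanish, the Danckwerts inlet flux equals $u$, and the zero-gradient outlet passes $c_N=1$, so this profile is the steady state. Hence $\mathbf A^{-1}\mathbf b=-\mathbf 1$, and $-\mathbf v^{\top}\mathbf A^{-1}\mathbf b=\mathbf v^{\top}\mathbf 1=\sum_i v_i=1$, exactly, for every Pe, $N$ and grid.

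This is the discrete content of your ``occupied volume divided by the rate'' remark. The intuition is right, but the argument must go through the steady state, not the column sums.

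Also normalize the impulse to unit mass, $\mathbf c(0^+)=\mathbf b$. With $u=\delta$ as written, the injected mass is $k$, not $1$, although the ratio defining the mean is unaffected.
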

\noindent\emph{Proof sketch.} $\mathbf v^{\top}\mathbf A=-\mathbf e_N^{\top}$ and $\mathbf A\mathbf 1=-\mathbf b$ give conservation and a unit-mean transit time per tube. A particle takes tube $m$ with probability $\phi_m$ and stays $\nu_m/\phi_m$ times longer, and Cauchy-Schwarz gives $\sum_m\nu_m^2/\phi_m\ge1$. \hfill\qedsymbol
 
\noindent As one curve cannot identify the split next to eight physical parameters, an MLP $f_\psi$ ($9{\to}32{\to}32{\to}2$) shared across patients maps label-free descriptors $\mathbf z_r$ (tissue class, $\ln F^0$, $\ln V^0$, time to peak and width relative to the input) to $(\nu_1,\phi_1)=0.05+0.9\,\sigma(f_\psi(\mathbf z_r))$. In patient-level 5-fold cross-validation, it is trained jointly with the physics of the training patients by minimizing $\sum_r[L_r(\theta_r;f_\psi(\mathbf z_r))+\eta(f_{\psi,1}-f_{\psi,2})^2]$, with $\eta=10^{-3}$ favouring one rate, and frozen on held-out patients.

\begin{figure*}[t]
\centering
\includegraphics[width=0.9\textwidth]{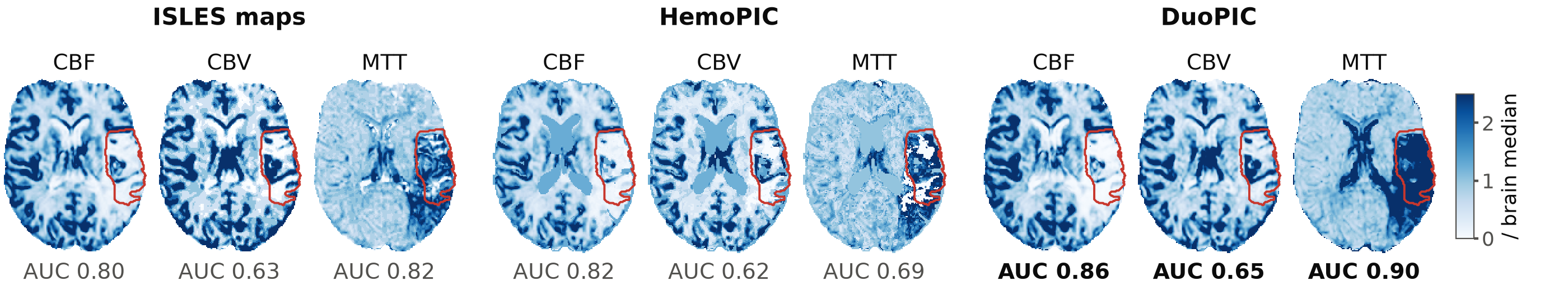}
\caption{CBF, CBV and MTT of one ISLES'17 patient relative to the brain median, with the final infarct in red, from the ISLES maps, HemoPIC \cite{lee2026hemopic} and DuoPIC, with the lesion AUC of each map (best in bold).}
\label{fig:hemopic}
\end{figure*}

\subsection{Voxel-level fit}
\label{ssec:voxel}
The final maps fit every voxel curve with its ROI model, which fixes what a noisy curve cannot identify ($R$, $\tau$, $F$, $\beta$, Pe, split). Each voxel fits its amplitude, its shape through $V$ (only $F/V$ enters it) and its delay, $\vartheta_{\mathbf x}=(\ln V_{\mathbf x},\ln C_{\mathrm{in},\mathbf x},\delta_{\mathbf x})$, by minimizing
\begin{equation}
L_{\mathbf{x}}=\frac{N_{\mathbf{x}}}{D_{\mathbf{x}}}+(\vartheta_{\mathbf{x}}-\vartheta_r)^{\top}\Lambda\,(\vartheta_{\mathbf{x}}-\vartheta_r),
\label{eq:voxel}
\end{equation}
with diagonal priors $\Lambda$ around the ROI values $\vartheta_r$, fixed in advance. Reusing the ROI modes, batched Levenberg-Marquardt fits the ${\sim}2\cdot10^{5}$ voxels of a patient in about 100\,s on one CPU thread. Since shape identifies MTT better than amplitude identifies volume, $\mathrm{MTT}_{\mathbf{x}}=60V_{\mathbf{x}}/\bar q$, with $\bar q$ the late ROI outflow, is combined with ID-CBV as $\mathrm{CBF}_{\mathbf{x}}=60\,\mathrm{CBV}_{\mathbf{x}}/\mathrm{MTT}_{\mathbf{x}}$, obeying Eq.~\eqref{eq:cvt}.
 
\subsection{Uncertainty quantification}
\label{ssec:uq}
With a suitable temperature, the loss becomes a likelihood,
\begin{equation}
p(\theta\mid c)\propto e^{-\kappa L(\theta)},\quad \kappa=\frac{n_{\mathrm{eff}}\,D}{2\hat N},\quad n_{\mathrm{eff}}=\frac{T}{2\sqrt{\pi}\,\sigma_s},
\label{eq:posterior}
\end{equation}
so $\kappa N/D$ is a Gaussian negative log-likelihood (up to a constant) with the fitted residual $\hat N$, $n_{\mathrm{eff}}$ counts the frames left independent by smoothing ($\sigma_s=1.8$ frames), and the mode is the fit itself. ROI and PDE parameters are sampled by random-walk Metropolis preconditioned by a Laplace approximation, and voxels add their own Laplace approximation via the implicit function theorem. Bootstrap networks and a pipeline ensemble (k-means seeds and sizes) measure network and partition variability. Checks use hidden frames, uncertain voxels and semi-synthetic refits.
 

\section{Experimental setup}
\label{sec:experimental_setup}
\textbf{Data.} We used the 43 ISLES'17 subjects \cite{winzeck2018isles} with DSC perfusion and final infarct masks, preprocessed as in HemoPIC \cite{lee2026hemopic} with four SynthSeg tissue classes \cite{billot2023synthseg}. As an external test, we utilised the 149 ISLES'24 subjects \cite{riedel2024isles} with CT perfusion and follow-up MRI masks (145 with infarct in view), registered in non-contrast CT, baseline-subtracted and smoothed in plane (3-voxel Gaussian) for all methods. ISLES'17 lesion detection used the 36 subjects whose AIF were earlier, narrower and higher than the tissue mean.

\noindent\textbf{Metrics.} We reported the normalized root-mean-square error (nRMSE) of the tracer reconstruction, the lesion ROC AUC and average precision (AP) of CBF (low), CBV (low) and MTT (high) relative to the brain median with paired Wilcoxon tests and patient-level bootstrap intervals, and the Akaike information criterion (AIC) of the tissue models.

\noindent\textbf{Implementation details.} The closure was trained with Adam and a cosine schedule for 1500 steps (learning rates $2\cdot10^{-2}$ physics, $2\cdot10^{-3}$ network), held-out patients were refitted for 1000 steps, and the posterior run four chains of 6000 steps after 3000 burn-in, keeping every fourth (1500 draws per chain).

\section{Experimental Results}
\label{sec:results}
 
\noindent\textbf{Model fit and physiology.} HemoPIC's CBF and CBV largely returned its deconvolution priors: in about 90\% of its ROIs, the posterior SD of $F$ and $V$  equaled the prior SD. With consistent units, the data constrained both in over 90\% of the ROIs. The compartment still missed 22-37\% of the signal at the bolus peak (HemoPIC 21-44\%, Fig.~\ref{fig:residuals}). The PDE lowered the brain nRMSE, and the neural PDE lowered it further on held-out patients in all five folds (Table~\ref{tab:pde}); on hidden frames, it predicted the curve better than the compartment in 65\% of the ROIs (a single tube in 36\%). In normal grey matter, its median CBF (52\,ml/100\,g/min), CBV (4.2\,ml/100\,g) and MTT (4.9\,s) were physiological (HemoPIC 22, 1.4 and 4.6).

\begin{table}[!t]
\centering
\setlength{\abovecaptionskip}{4pt}
\caption{Held-out diagnostics. Parameter coverage: semi-synthetic; width: median 95\% interval / value; nMSE: hidden-frame error, ROI median; -: not evaluated.}
\label{tab:pde}
\small
\setlength{\tabcolsep}{0pt}
\setlength{\aboverulesep}{1pt}
\setlength{\belowrulesep}{1.5pt}
\begin{tabular*}{\columnwidth}{@{\extracolsep{\fill}}p{0.42\columnwidth} *{3}{>{\centering\arraybackslash}p{0.19333\columnwidth}}@{}}
\hline
\textbf{Metric} & \textbf{Compart.} & \textbf{PDE} & \textbf{Neural PDE} \\
\hline
Brain nRMSE $\downarrow$ & 0.116 & 0.108 & \bfseries 0.094 \\
Lesion nRMSE $\downarrow$ & 0.271 & 0.277 & \bfseries 0.265 \\
\hline
CBF coverage (\%) $\to$95 & \bfseries 97.0 & {-} & 98.2 \\
MTT coverage (\%) $\to$95 & 88.1 & {-} & \bfseries 93.8 \\
$\tau$ coverage (\%) $\to$95 & 70.3 & {-} & \bfseries 78.4 \\
CBF width (\%) $\downarrow$ & 85 & {-} & \bfseries 72 \\
MTT width (\%) $\downarrow$ & 94 & {-} & \bfseries 71 \\
MTT median $|$rel.\ err.$|$ (\%) $\downarrow$ & 15.7 & {-} & \bfseries 8.9 \\
\hline
Hidden 95\% cov.\ (\%) $\to$95 & 84.0 & {-} & \bfseries 84.8 \\
Hidden nMSE $\downarrow$ & 0.338 & 0.333 & \bfseries 0.197 \\
\addlinespace[2pt]
AIC below comp.\ (\%) $\uparrow$ & {-} & 27 & \bfseries 64 \\
\hline
\end{tabular*}
\end{table}

 \begin{table}[!t]
\centering
\setlength{\abovecaptionskip}{4pt}
\caption{Lesion AUC, 36 ISLES'17 patients with an accepted AIF, Mean: map-averaged AUC; AP: average precision.}
\label{tab:auc}
\small
\setlength{\tabcolsep}{2pt}
\setlength{\aboverulesep}{1.5pt}
\setlength{\belowrulesep}{2pt}
\begin{tabular*}{\columnwidth}{@{\extracolsep{\fill}}l *{5}{S[table-format=1.3]}@{}}
\hline
\textbf{Method} & {\textbf{CBF}} & {\textbf{CBV}} & {\textbf{MTT}} & {\textbf{Mean}} & {\textbf{AP}} \\
\hline
ISLES maps & 0.745 & 0.584 & 0.700 & 0.676 & 0.085 \\
HemoPIC \cite{lee2026hemopic} & 0.773 & 0.592 & 0.722 & 0.696 & 0.104 \\
oSVD \cite{wu2003osvd} & 0.782 & \bfseries 0.623 & 0.759 & 0.721 & 0.109 \\
\hline
DuoPIC, compartment & 0.790 & 0.552 & 0.766 & 0.703 & 0.113 \\
DuoPIC, PDE & 0.787 & 0.555 & 0.774 & 0.705 & 0.113 \\
DuoPIC, neural PDE & 0.788 & 0.561 & 0.768 & 0.706 & 0.113 \\
DuoPIC, ID-CBV & 0.807 & \bfseries 0.623 & 0.745 & 0.725 & 0.114 \\
\textbf{DuoPIC} & \bfseries 0.810 & \bfseries 0.623 & \bfseries 0.801 & \bfseries 0.745 & \bfseries 0.120 \\
\hline
\end{tabular*}
\end{table}

\noindent\textbf{Uncertainty.} In the semi-synthetic check (Table~\ref{tab:pde}), only the MTT and relaxation-time ($\tau$) intervals were too narrow, and the neural PDE brought both closer to 95\%. It also narrowed the median 95\% CBF interval from 116\% of the value (HemoPIC) to 72\%, and its network uncertainty (0.6\% of CBF) was small next to the posterior one (16.9\%). Removing the most uncertain voxels (Fig.~\ref{fig:maps}) raised the lesion AUC of MTT from 0.755 to 0.787.
  
\noindent\textbf{Lesion detection.} With the same AIF, oSVD already outperformed HemoPIC and the ISLES maps in mean lesion AUC (Table~\ref{tab:auc}), whereas the better-fitting PDE and neural PDE barely changed it. Without any tuning, DuoPIC achieved the best mean AUC, $0.023$ above oSVD (95\% CI $[0.011,0.035]$), and the best MTT map ($+0.042$, $p=0.002$). It improved every map over HemoPIC ($p<0.001$) and the mean AUC in 31 of 36 patients (Fig.~\ref{fig:hemopic}). On 145 ISLES'24 CT patients, it reached a mean AUC of 0.743 (best oSVD setting 0.691, HemoPIC 0.688, vendor maps 0.687, $p<10^{-20}$ each) and an AP of 0.120 (HemoPIC 0.104, vendor maps 0.088).

\section{Conclusion}
\label{sec:conclusion}
DuoPIC makes a perfusion digital twin reference-free and uncertainty-aware. Its neural transport PDE better predicts unseen frames with physical CBF, CBV and MTT, and its tuning-free voxel fit achieves state-of-the-art lesion AUC and AP. Absolute values still depend on the automatic AIF, and the reference infarct also reflects treatment.

\newpage

\section{Compliance with ethical standards}
\label{sec:ethics}
This study used open-access ISLES'17 \cite{winzeck2018isles} and ISLES'24 \cite{riedel2024isles} data, whose licenses require no ethical approval.

\section{Acknowledgments}
\label{sec:acknowledgments}
This work was supported in part by the UK British Council ISPF programmes: the Early Career Fellowships (grant ID: 1860), the International Research Collaborations with Vietnam (grant ID: 1203701580), and the Going Global Partnerships 2025 under the Transnational Education (TNE) Exploratory Grants scheme (Grant ID: TNE2025-4829).

\bibliographystyle{IEEEbib}
\bibliography{bib/introduction_refs}
\end{document}